\numberwithin{equation}{section} 
\newtheorem{theorem}{Theorem}[section]                   %[section]   %numberes automatically
\def\R{{\mathbb{R}}}
\DeclareMathOperator*{\argmin}{arg\,min}
\DeclareMathOperator*{\argmax}{arg\,max}
\title{Stable recovery of sparse vectors from \\ random sinusoidal feature maps}
\name{Mohammadreza Soltani and Chinmay Hegde
\thanks{This work is supported in part by the National Science Foundation under the grants CCF-1566281 and IIP-1632116.}}
\address{Department of Electrical and Computer Engineering, Iowa State University}
\begin{document}
\ninept

\maketitle
\begin{abstract}
Random sinusoidal features are a popular approach for speeding up kernel-based inference in large datasets. Prior to the inference stage, the approach suggests performing dimensionality reduction by first multiplying each data vector by a random Gaussian matrix, and then computing an element-wise sinusoid. Theoretical analysis shows that collecting a sufficient number of such features can be reliably used for subsequent inference in kernel classification and regression.

In this work, we demonstrate that with a mild increase in the dimension of the embedding, it is also possible to reconstruct the data vector from such random sinusoidal features, provided that the underlying data is sparse enough. In particular, we propose a numerically stable algorithm for reconstructing the data vector given the nonlinear features, and analyze its sample complexity. Our algorithm can be extended to other types of structured inverse problems, such as demixing a pair of sparse (but incoherent) vectors. We support the efficacy of our approach via numerical experiments.
\end{abstract}
\begin{keywords}
random features, sparsity, nonlinear recovery
\end{keywords}

\section{Introduction}
\label{sec:intro}

\subsection{Setup}

Several popular techniques in statistical regression and classification rely on the \emph{kernel trick}, which enables approximating complicated nonlinear functions given a sufficiently large number of training samples. However, in large-scale problems where the number as well as dimension of the training data points is high, the computational costs of kernel-based techniques can be severe. 

One approach to alleviate these computational costs is via the use of \emph{random sinusoidal feature maps}, pioneered by~\cite{rahimi2007random}. The mechanism is conceptually simple. Suppose that the kernel of choice is the Gaussian (RBF) kernel. Prior to the inference stage, the approach suggests performing a \emph{dimensionality reduction} of a given data vector by first multiplying by a random Gaussian matrix, followed by the application of a nonlinearity (specifically, a sinusoid or complex exponential). 
Mathematically, one can represent the above process as follows. Let $\mathcal{X} \subseteq \R^{n}$ be any set in the data space, and consider a data vector $x \in \mathcal{X}$.  Construct a matrix $B \in \R^{q \times n}$ whose entries are independently from a normal distribution. Then, modulo scaling factors, the random sinusoidal feature map, $y = \mathcal{A}(x)$, can be represented as the composition of the linear map $A\in\mathbb{R}^{m\times n}$ with a (complex) sinusoidal nonlinearity (see section~\ref{sec:alg} for more details):
\begin{align}
\label{eq:sine_obs}
z_j &= (Ax)_j,~~~y_j= \exp(i \, z_j),~~~~j=1\dots m\ .
\end{align}

One can alternately replace the complex exponential by a real-valued sinusoidal function with similar consequences. The authors of~\cite{rahimi2007random} show that collecting a sufficient number of such features leads to statistically robust performance of kernel-based learning methods. Subsequent works~\cite{additivekernels,hashkernels,fastfood,nonlincompanalysis} have progressively extended this approach to more general types of kernels, faster methods of dimensionality reduction, as well as sparsity assumptions on the training data~\cite{aswinkernelsparse}.

Moving beyond inference, a natural question arises whether the original data vector $x$ can at all be \emph{reconstructed} from the nonlinear features $y$. Geometrically, we would like to understand if (and how) the random feature map can be efficiently inverted, given the observations $y$ and knowledge of the dimensionality reduction operator $B$. This question is of both theoretical and practical interest; if the embedding is shown to be \emph{reversible}, then it could have implications for the privacy-preserving properties of random feature maps.

Observe that \eqref{eq:sine_obs} is analogous to the well-studied statistical notion of a generalized linear model (GLM) with a sinusoidal \emph{link} (or transfer) function. Here, the data vector $x \in \R^n$ represents the model parameters. Reconstruction of high dimensional vectors in the GLM setting has been an intense focus of study in the statistical learning literature in recent years; applications of such recovery methods span scalable machine learning~\cite{kakade2011}, 1-bit compressive sensing~\cite{boufounos20081,jacques1bit}, and imaging~\cite{candes2015phase}. In signal processing applications, observation models with periodic link functions have been proposed for scalar quantization~\cite{boufounosquant} as well as computational imaging~\cite{modulocamera}. 

\subsection{Contributions}

We consider a slightly more general version of the model~\eqref{eq:sine_obs} where the features $y$ can be corrupted by noise. Moreover, we assume that the data vector $x$ is $s$-sparse, i.e., it contains no more than $s$ non-zeros. For this model, we propose an efficient, two-stage algorithm for reconstructing signals from a small number of random sinusoidal features (in particular, far smaller than the dimension of the data). To the best of our knowledge, our method is the first to propose an algorithm that is specialized to reconstructing sparse vectors from random sinusoidal feature maps.

To supplement our algorithm, we provide a theoretical sample complexity analysis. Our theory demonstrates that stable recovery from nonlinear sinusoidal observations can be done with \emph{essentially} the same number of samples as standard sparse recovery with linear observations. The only additional cost is a logarithmic factor overhead in the dimension of the embedding that depends on $n$ and the Euclidean norm of the original data vector. We also support the theory and algorithms via a range of numerical experiments.

\subsection{Techniques}

There are two main challenges that we will need to overcome. The first challenge is due to the ill-posed nature of the problem; in the high-dimensional regime, the total number of samples can be far less than the native dimension of the data. The second (and more compelling) challenge arises due to the fact that the sinusoidal transfer function is highly non-invertible. The typical way to deal with this is to assume some upper bound on the magnitude of the entries of the linear projection $(Ax)$. However, we observe that unless this upper bound is relatively small, we can only estimate $z_j$ up to some unknown integer multiple of $2\pi$. If the projected dimension of the linear embedding, $q$, is large, then the number of possible combinations becomes exponential in $q$. 

We alleviate these challenges using a simple idea. The key is to \emph{decouple} the problem of inverting the sinusoidal nonlinearity from the sparse recovery step by replacing the linear map $A$ with a different (carefully designed) map that still possesses all the properties required for reliable kernel-based inference. Particularly, our new linear map $A$ is the product of two matrices $D \cdot B$; the way to construct $D$ and $B$ is described in detail below. This technique has been inspired by some recent approaches for the related problem of \emph{phase retrieval}~\cite{iwen2015robust,bahmani2015efficient}. Using this decoupling technique, we can separately solve the inversion of the nonlinearity and the actual recovery of the sparse high-dimensional data in two stages. 

For the first stage, we \emph{individually} estimate the linear projection $z = Bx$ using classical signal processing techniques for line spectral estimation. In particular, we leverage the method of {matched filtering from random samples}~\cite{eftekhari2013matched}. In the absence of noise, any spectral estimation technique (such as MUSIC, root-MUSIC, and ESPRIT) can also be used; however, our Monte Carlo simulations show below that the randomized method of~\cite{eftekhari2013matched} et al. is considerably more robust compared to these more classical methods. For the second stage, we can use any sparse recovery method of our choice. In our experiments, we have used the CoSaMP algorithm proposed in~\cite{cosamp}.

While conceptually simple, the generic nature of our algorithm is advantageous in many ways. From a theoretical perspective, mirroring the approach of~\cite{iwen2015robust}, we can combine existing results for robust line spectral estimation with robust sparse recovery and obtain guarantees on the sample complexity and robustness of our algorithm. Moreover, since the nonlinear inversion and the sparse recovery steps are decoupled, one could conceivably extend this decoupling approach to a variety of other signal models and recovery algorithms (such as structured-sparsity~\cite{modelcs,approxIT}, low-rank matrix models~\cite{rechtfazelparillo}, and demixing a pair of sparse incoherent vectors~\cite{soltani2016fastIEEETSP17}). %\footnote{Due to space constraints, we solely focus on sparse signal models here.}.

\subsection{Comparison with prior work}

Our approach relies upon ideas from several related problems in machine learning and signal processing. The technique of using random projections in conjunction with sinusoidal nonlinearities has emerged as a popular alternative for kernel-based inference~\cite{rahimi2007random,additivekernels,hashkernels,fastfood,nonlincompanalysis}. However, these works do not explicitly consider recovering the original data from the nonlinear random projections. 

In parallel, there has been considerable progress for various problems where \emph{reconstruction} of data vectors having some form of low-dimensional structure is of chief interest~\cite{beck2013sparsity,plan2014high,Cons2015NIPS,yang2015sparse,bahmani2011greedy}. Particularly, there has been considerable recent interest in reconstructing from nonlinear measurements (features). However, the majority of the methods deal with the case where the nonlinearity is either monotonic~\cite{kakade2011} or satisfies restricted regularity conditions~\cite{bahmani2011greedy,yang2015sparse}.

None of the proposed methods in the literature can be directly applied for the specific problem of stable sparse reconstruction from random sinusoidal features, with one exception: the theoretical framework developed in~\cite{plan2014high} can indeed be adapted to our problem. However, the recovery algorithm only yields an estimate up to a scalar ambiguity, which can be problematic in applications. Moreover, even if scalar ambiguities can be tolerated, we show in our experimental results below that the sample complexity of this method is far too high in practice. In contrast, our proposed method yields stable and accurate results with only a mild factor increase in sample complexity when compared to standard sparse recovery.

\section{Mathematical model and Algorithm}
\label{sec:alg} 

In this section, we present our algorithm for solving problem~\eqref{eq:sine_obs} and provide the theory for sample complexity of the proposed algorithm. We first establish some notation. %$\|x\|_p$ denotes the $\ell_p$ norm of vector $x$. 
%Also, when we say that a signal $x$ is $s$-sparse, we mean that $\|x\|_0\leq s$. 
Let the $j^{\mathrm{th}}$ entry of the signal $x\in\mathbb{R}^n$ be denoted as $x_j$. For any $j \in [q]$, $x(j:q:(k-1)q+j)$ denotes the sub-vector in $\mathbb{R}^k$ formed by the entries of $x$ starting at index $j + qr$, where $r = 0, 1, \ldots, k-1$. Similarly, $A((j:q:(k-1)q,l)$ represents the sub-vector constructed by picking the $l^{\mathrm{th}}$ column of any matrix $A$ and selecting the entries of this column with the same procedure as mentioned. 

%Moreover, we need the following definitions from compressive sensing literature:
%\begin{definition}{Restricted Isometry Property (RIP)}
%A matrix $B$ satisfies RIP, if the following satisfies for all $s$-sparse signals:
%\begin{align*}
%(1-\delta)\|x\|_2\leq\|Ax\|_2\leq(1+\delta)\|x\|_2,
%\end{align*}
%,then smallest number $\delta > 0$ is denoted by $\delta_s$ and called RIP-constant.
%\end{definition}

As the key component in our approach, instead of using a random Gaussian linear projection as proposed in~\cite{rahimi2007random}, we construct a linear operator $A$ that can be \emph{factorized} as $A = D B$, where $D\in\mathbb{R}^{m\times q}$, and $B\in\mathbb{R}^{q\times n}$. We assume that $m$ is a multiple of $q$, and that $D$ is a concatenation of $k$ diagonal matrices of $q \times q$ such that the diagonal entries in the blocks of $D$ are i.i.d. random variables, drawn from a distribution that we specify later. The choice of $B$ depends on the model assumed on the data vector $x$; if the data is $s$-sparse, then $B$ can be any matrix that supports stable sparse recovery (more precisely, $B$ satisfies the null-space property~\cite{foucart2013}). Overall, our low-dimensional feature map can be written as:
%\begin{align}\label{RDMF}
%y = \exp(iDBx) + e = \exp\left(i
%\begin{bmatrix}
%d_{1}^1 & \ldots & 0 \\
% &  \ddots &  \\
%0 &  \ldots & d_{q}^1\\
%d_{1}^2 & \ldots & 0 \\
% &  \ddots &  \\
%0 &  \ldots & d_{q}^2\\
% & \vdots &  \\
%d_{1}^k & \ldots & 0 \\
% &  \ddots &  \\
%0 &  \ldots & d_{q}^k\\
%\end{bmatrix}
%Bx
%\right) + e,
%\end{align}
\begin{align}\label{RDMF}
y = \exp(iDBx) + e = \exp\left(i
\begin{bmatrix}
D^1 \\
D^2 \\
 \vdots  \\
D^k
\end{bmatrix}
Bx
\right) + e,
\end{align}
where $D^i$ are diagonal matrices and $e\in\mathbb{R}^m$ denotes additive noise such that $e\sim\mathcal{N}(0,\sigma^2I)$. The goal is to stably recover $x$ from the embedding $y$.
%As we can see, matrix $D$ has $k$ diagonal blocks with entries $d_l^r$ which are are i.i.d standard normal random variables for all $l=1\ldots q$ and $r = 1\ldots k$.  

By the block diagonal structure of the outer projection $D$, we can reduce the overall reconstruction problem to first obtaining a good enough estimate of each entry of $Bx$. We show below that this can be achieved using line spectral estimation methods. The output of the first stage is used as the input of the second stage, which involves estimating $x$ from a (possibly noisy) estimate of $Bx$. 

%We discuss these two stages in detail as follows.

\subsection{Line spectral estimation}\label{ToneEst}

Consider the observation model~\eqref{RDMF} and let $z = Bx \in \R^q$. Suppose we know \emph{a priori} that the entries of $z$ belong to some bounded set $\Omega \in R$. Fix $l \in [q]$, and let {$t = D(l:q:(k-1)q+l,l), u = y(l:q:(k-1)q+l), h = e(l:q:(k-1)q+l)$} which are vectors in $\mathbb{R}^k$. We observe that:
\[
u = \exp(i z_l t) + h \, .
\]
In other words, $u$ can be interpreted as a collection of time samples of a (single-tone, potentially off-grid) complex-valued signal with frequency $z_l \in \Omega$, measured at time locations $t$. Therefore, we can independently estimate $z_l$ for $l=1, \ldots, q$ by solving a least-squares problem~\cite{eftekhari2013matched}:
\begin{align}
\label{OptmExp}
\widehat{z_l} = \underset{v \in \Omega}{\argmin}\|u - \exp(i \, vt )\|_2^2 = \underset{v \in \Omega}{\argmax}\left|\langle u,\psi_{v}\rangle\right|,
 \end{align}
for all $l=1,\ldots,q$, where  $\psi_{v}\in\mathbb{R}^k$ denotes a \emph{template vector} given by $\psi_{v} = \exp(j t v)$ for any $v \in \Omega$. In essence, the solution of the least-squares problem can be interpreted as a \emph{matched filter}. Numerically, the optimization problem in~\eqref{OptmExp} can be solved using a grid search over the set $\Omega$, and the resolution of this grid search controls the running time of the  algorithm; for fine enough resolution, the estimation of $z_l$ is more accurate albeit with increased running time. This issue is also discussed in~\cite{eftekhari2013matched} and more sophisticated spectral estimation techniques have appeared in~\cite{eldarxampling,tangcsoffgrid,chioffgrid}. After obtaining all the estimates $\widehat{z_l}$'s, we stack them in a vector $\widehat{z}$. 

\subsection{Sparse recovery}
We now propose to recover the high-dimensional signal $x \in \R^m$ in problem~\eqref{RDMF} using our estimated $\widehat{z} \in \R^q$. According to our model, we also have assumed that the matrix $B$ supports {stable sparse recovery}, and the underlying signal $x$ is $s$-sparse. Hence, we can use any generic sparse recovery algorithm of our choice to obtain an estimate of $x$. In our simulations below, we use the CoSaMP algorithm~\cite{cosamp} due to its speed and ease of parameter tuning. Again, we stress that this stage depends on a model assumption on $x$, and other recovery algorithms (such as structured-sparse recovery~\cite{modelcs}, low-rank matrix recovery~\cite{rechtfazelparillo}, or demixing a pair of sparse (but incoherent) vectors~\cite{soltani2016fastIEEETSP17}) can equally well be used. Our overall algorithm, \emph{Matched Filtering+Sparse recovery} (MF-Sparse) is described in pseudocode form in Algorithm~\ref{alg:RDMF}.
We now provide a theoretical analysis of our proposed algorithm. Our result follows from a concatenation of the results of~\cite{eftekhari2013matched} and~\cite{cosamp}.

\begin{theorem}[Sample complexity of MF-Sparse]
Assume that the nonzero entries of $D$ are i.i.d.\ samples from a uniform distribution $[-T,T]$, and the entries of $B$ are i.i.d.\ samples from $\mathcal{N}(0,1/q)$. Also, assume $\|x\|_2 \leq R$ for some constant $R>0$. Set $m = kq$ where $k=c_1 \log\left(\frac{Rq}{\varepsilon}\frac{1}{\delta}\right)(1+\sigma^2)$ for some $\varepsilon>0$ and $q = c_2 \left(s\log\frac{n}{s}\right)$. Set $\omega = c_3 R$ and $\Omega = [-\omega,\omega]$. Then, MF-Sparse returns an estimate $\widehat{x}$, such that 
$$\|x - \widehat{x}\|_2 \leq C \varepsilon \, ,$$ 
with probability exceeding $1-\delta$. Here, $c_1, c_2, c_3, C$ are constants. 
\end{theorem}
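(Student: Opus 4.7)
The plan is to exploit the decoupled two-stage structure of MF-Sparse. Because $D$ is block-diagonal, the observation vector $y$ decomposes into $q$ independent single-tone estimation subproblems, one for each component $z_l = (Bx)_l$. I would first control the first stage (matched filtering of each $z_l$) using the guarantees of Eftekhari et al., union bound over $l=1,\ldots,q$ to obtain a good vector estimate $\widehat{z}$ of $z=Bx$, and then feed $\widehat{z}$ into CoSaMP whose correctness is ensured by the RIP of the Gaussian matrix $B$. The overall failure probability and error bound then follow by composing the two stages, with $k$ chosen to make the per-tone estimation error small enough that the induced $\ell_2$-noise at the input of CoSaMP is at most $O(\varepsilon)$.

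As a first preliminary step, I would verify that the assumed search set $\Omega=[-c_3 R, c_3 R]$ actually contains $z_l$ for every $l$. Since the rows of $B$ are $\mathcal{N}(0, I/q)$ and $\|x\|_2\le R$, each $z_l$ is $\mathcal{N}(0, \|x\|_2^2/q)$, so a standard Gaussian tail bound plus a union bound over $l\in[q]$ gives $\max_l|z_l|\le c_3 R$ with probability at least $1-\delta/3$ for an appropriate constant $c_3$ (even much weaker bounds would suffice). Conditioning on this event, the least-squares problem \eqref{OptmExp} has a well-defined true minimizer inside $\Omega$.

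For the first stage, I would invoke the matched-filtering-from-random-samples result of \cite{eftekhari2013matched}: given $k$ time samples at i.i.d.\ uniform locations in $[-T,T]$ and additive Gaussian noise of variance $\sigma^2$, the maximum-correlation estimator \eqref{OptmExp} recovers an on-interval frequency to accuracy $\varepsilon_1$ with probability at least $1-\delta_1$, provided $k\gtrsim (1+\sigma^2)\log(1/(\varepsilon_1\delta_1))$. Setting $\varepsilon_1 = \varepsilon/(C_1\sqrt{q})$ and $\delta_1 = \delta/(3q)$, and applying a union bound over the $q$ tones, yields $|\widehat{z_l}-z_l|\le \varepsilon/(C_1\sqrt{q})$ uniformly in $l$ with probability $1-\delta/3$, hence $\|\widehat{z}-z\|_2 \le \varepsilon/C_1$. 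The resulting choice of $k$ matches the theorem's scaling $c_1(1+\sigma^2)\log(Rq/(\varepsilon\delta))$ (the $R$ factor enters when one tracks the dependence of the correlation landscape's sidelobes on the true frequency magnitude).

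For the second stage, I would use the standard RIP-based guarantee for CoSaMP from \cite{cosamp}. Because $B$ has i.i.d.\ $\mathcal{N}(0,1/q)$ entries and $q = c_2 s\log(n/s)$, $B$ satisfies the $(2s,\delta_{2s})$-RIP with sufficiently small constant (say $\delta_{2s}<0.1$) with probability at least $1-\delta/3$. Then CoSaMP applied to the noisy linear observations $\widehat{z} = Bx + \eta$, with $\eta := \widehat{z}-Bx$ and $\|\eta\|_2\le\varepsilon/C_1$, returns $\widehat{x}$ with $\|\widehat{x}-x\|_2 \le C_{\mathrm{CS}}\|\eta\|_2 \le C\varepsilon$. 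Combining this with the two first-stage events via a final union bound produces the $1-\delta$ success probability.

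The main obstacle I expect is bookkeeping the dependencies between the two stages: the per-tone accuracy $\varepsilon_1$ must be tightened by the factor $\sqrt{q}$ so that the aggregated $\ell_2$ error $\|\widehat{z}-z\|_2$ does not grow with $q$, and the per-tone failure probability must absorb the union bound of size $q$. Both contributions show up as additive $\log q$ terms inside $k$, which is exactly why the stated sample complexity is $k \asymp (1+\sigma^2)\log(Rq/(\varepsilon\delta))$ rather than the naive $\log(1/\varepsilon)$. Verifying that the Eftekhari et al.\ bound is indeed uniform over all frequencies in $\Omega$ (and not merely pointwise), and that its dependence on $R$ is only logarithmic, is the most delicate quantitative step; everything else reduces to standard Gaussian tail and RIP arguments.
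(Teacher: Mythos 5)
Your proposal follows essentially the same route as the paper's own proof: invoke the matched-filtering guarantee of Eftekhari et al.\ per tone with accuracy $\varepsilon/O(\sqrt{q})$, union bound over the $q$ tones, bound $|\Omega|$ by $O(R)$ using the Gaussianity of $B$, and then pass the resulting $\ell_2$-bounded perturbation of $Bx$ to CoSaMP's stable recovery guarantee. If anything, your bookkeeping is slightly more careful than the paper's (you allocate the failure probability as $\delta/(3q)$ per tone, whereas the paper's union bound as written only yields $1-q\delta$), but the underlying argument is the same.
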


\begin{proof}
Set $\varepsilon'>0$. Based on Corollary 8 of~\cite{eftekhari2013matched}, by setting $|T| = \mathcal{O}(\frac{1}{\varepsilon'})$, we need $k=\mathcal{O}\left((1+\sigma^2)\log\left(\frac{|\Omega|}{\varepsilon'}\frac{1}{\delta}\right)\right)$ to obtain $|\widehat{z_l} -z_l|\leq\varepsilon'$  with probability at least $1-\delta$ for each $l=1, \ldots, q$. Here, $|\Omega|$ denotes the radius of the feasible range for $z$, and appears in the maximization problem~\eqref{OptmExp}. Observe that $|\Omega| = \mathcal{O}\left(\|Bx\|_{\infty}\right)$. If the entries of matrix $B$ are chosen as stipulated in the theorem, then $\|Bx\|_{\infty} \leq  \|Bx\|_{2} \lesssim \mathcal{O}(R)$ which justifies the choice of $|\Omega|$ in the theorem. Thus, $k=\mathcal{O}\left((1+\sigma^2)\log\left(\frac{R}{\varepsilon'}\frac{1}{\delta}\right)\right)$. 
By a union bound, it follows that with probability at least $1- q\delta$, we have $\|\widehat{z} - z\|_{\infty}\leq\varepsilon'$. Now, we can write $\widehat{z} = z + e' = Bx + e'$ where $\|e'\|_{\infty}\leq\varepsilon'$. Since we have used CoSaMP in the sparse recovery stage, if the choice of $q=\mathcal{O}\left(s\log\frac{n}{s}\right)$ enables us to obtain $\|\widehat{x} - x\|_2\leq c\|e'\|_2\leq c\sqrt{q}\|e'\|_{\infty}\leq c\sqrt{q}\varepsilon'$~\cite{cosamp}. (In fact, a more general guarantee can be derived even for the case when $x$ is not exactly $s$-sparse.) Now, let $\varepsilon' = \mathcal{O}\left(\frac{\varepsilon}{\sqrt{q}}\right)$ to obtain the final bound on the estimation error $\|\widehat{x}-x\|_2\leq\mathcal{O}(\varepsilon)$ with $k=\mathcal{O}\left((1+\sigma^2)\log\left(\frac{Rq}{\varepsilon}\frac{1}{\delta}\right)\right)$.
\end{proof}

%\begin{remark}[Real random sinusoidal feature]
In model~\eqref{eq:sine_obs}, the features $y_j$ are modeled in terms of complex exponentials. With only a slight modification in the line spectral estimation stage, we can recover $x$ from real-valued random sine features. More precisely, these random features are given by:
\begin{align}
\label{eq:realsine_obs}
y = \sin(DBx) + e
\end{align}
where $e$ denotes the additive noise as defined before. If we follow an analogous approach for estimating $x$, then in lieu of the least squares estimator \eqref{OptmExp}, we have the following estimator:
\begin{align}
\label{OptmSin}
\widehat{z_l} &= \underset{v \in\Omega}{\argmin}\|u - \sin(v t)\|_2^2 \\
&= \underset{v \in \Omega}{\argmax}\left( 2\left|\langle u,\psi_{v}\rangle\right| - \|\psi_{v}\|_2^2\right),
\end{align}
for $l=1,\ldots,q$ and $u$ as defined above. {Also, $\psi_{v} = \sin(tv)$ for any $v \in \Omega$}. We only provide some numerical experiments for this estimator and leave a detailed theoretical analysis for future research.
%\end{remark}

\begin{algorithm}[t]
\caption{\textsc{MF-Sparse}}
\label{alg:RDMF}
\begin{algorithmic}
\State\textbf{Inputs:} $y$, $D$, $B$, $\Omega$, $s$
\State\textbf{Output:}  $\widehat{x}$
%\State $ k \leftarrow m/q$
\State \textbf{Stage 1: Tone estimation:}
\For {$l =1:q$}
\State $t \leftarrow D(l:q:(k-1)q+l,l)$
\State $u \leftarrow y(l:q:(k-1)q+l)$
%\State estimate $z_l$ from $u = \exp(jtz_l)$ using 
\State $\widehat{z_l} = \argmax_{\omega\in\Omega}|\langle y,\psi_{\omega}\rangle|$
\EndFor
\State $\widehat{z} \leftarrow [\widehat{z_1},\widehat{z_2}\ldots,\widehat{z_q}]^T$
\State \textbf{Stage 2: Sparse recovery}
%\State $\varepsilon \leftarrow \mathcal{O}(\frac{1}{|T|})$
\State $\widehat{x} \leftarrow \textsc{CoSaMP}(\widehat{z},B,s)$
\end{algorithmic}
\end{algorithm}

\section{Experimental Results}
\label{sec:res}

%In this section, we provide some numerical experiments for our proposed algorithm. 

We compare our algorithm with existing algorithms in various scenarios. In the first experiment, we assume that our random features are computed using a real sine link function. In other words, the features, $\{y_j\}$ is given by~\eqref{eq:realsine_obs} for all $j=1,\ldots,q$.

In Fig.~\ref{fig:sinNonoise}(a), we compare the probability of recovery of MF-Sparse with \textit{Gradient Hard Thresholding} (GHT), a projected-gradient descent type algorithm whose variants have proposed in~\cite{bahmani2011greedy,yuan2014gradient,jain2014iterative}. To recover $x$, GHT tries to minimize a specific loss function (typically, the squared loss) between the observed random feature vector, $y$, and $\sin(DBx)$ by iteratively updating the current estimate of $x$ based on a gradient update rule, and then projecting it into the set of $s$-sparse vectors via hard thresholding. 

The setup for the experiment illustrated in Fig.~\ref{fig:sinNonoise}(a) is as follows. First, we generate a synthetic signal of length $n = {2^{14}}$ with sparsity $s=100$ such that the support is random and the values of the signal in the support are drawn from a standard normal distribution. Then, the $\ell_2$-norm of $x$ is adjusted via a global scaling to coincide with three different values; $\|x\|_2 =1, \|x\|_2 = 15$, and $\|x\|_2 =30$. We generate a matrix $B\in\mathbb{R}^{q\times n}$ where $q =700$ and $n = 2^{14}$ such that the entries of it are i.i.d random variables with distribution $\mathcal{N}(0,\frac{1}{\sqrt{q}})$. All nonzero entries of $D$, $d_{l}^r$ for $l=1,\ldots, q$ and $r = 1, \ldots, k$ are assumed to be standard normal random variables for $k=1,\ldots, 8$. Next, we generate $y\in\mathbb{R}^m$ as $y = \sin(DBx)$ where $m = kq$. (There is no noise considered in this experiment.) By running MF-Sparse and GHT, we obtain the estimate of $x$, denoted by $\widehat{x}$, and calculate the (normalized) estimation error defined as $\frac{\|\widehat{x}-x\|_2}{\|x\|_2}$. We repeat this process for $60$ Monte Carlo trials, and define the empirical probability of {successful} recovery as the fraction of simulations in which the relative error is less than $0.05$. 

As we can see in Fig.~\ref{fig:sinNonoise}(a), MF-Sparse can successfully recover $x$ even with $k=4$ number of blocks with probability close to $1$ when the norm of $x$ is small. In this regime, i.e., $\|x\|_2$ being small, both GHT and MF-Sparse display similar performance. However, GHT shows extremely poor performance when $\|x\|_2$ has moderate or large value. The reason is that when the norm of $x$ is small, the entries of $Bx$ are also small (i.e., the entries of $DBx$ are close to the origin with high probability), and the sinusoidal nonlinearity can be assumed to be almost linear. Consequently, GHT can recover $x$ in this regime. However, this assumption breaks down for larger values of $\|x\|_2$.

\begin{figure}[t]
\begin{center}
\begingroup
\setlength{\tabcolsep}{.1pt} % Default value: 6pt
\renewcommand{\arraystretch}{.1} % Default value: 1
\begin{tabular}{cc}      %{c@{\hskip .1pt}c@{\hskip .1pt}c}
\includegraphics[trim = 8mm 58mm 15mm 60mm, clip, width=0.48\linewidth]{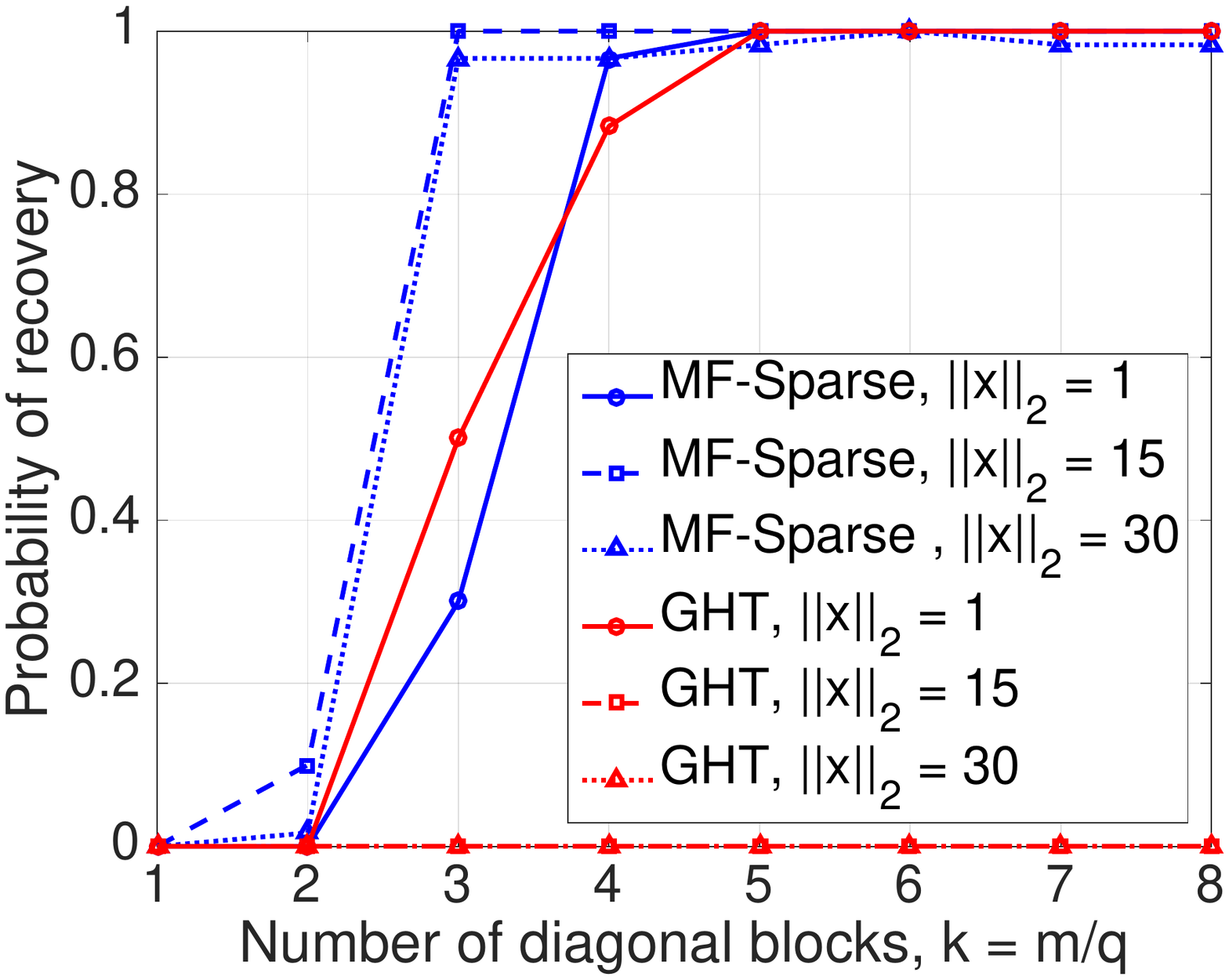}&
\includegraphics[trim = 8mm 58mm 15mm 60mm, clip, width=0.48\linewidth]{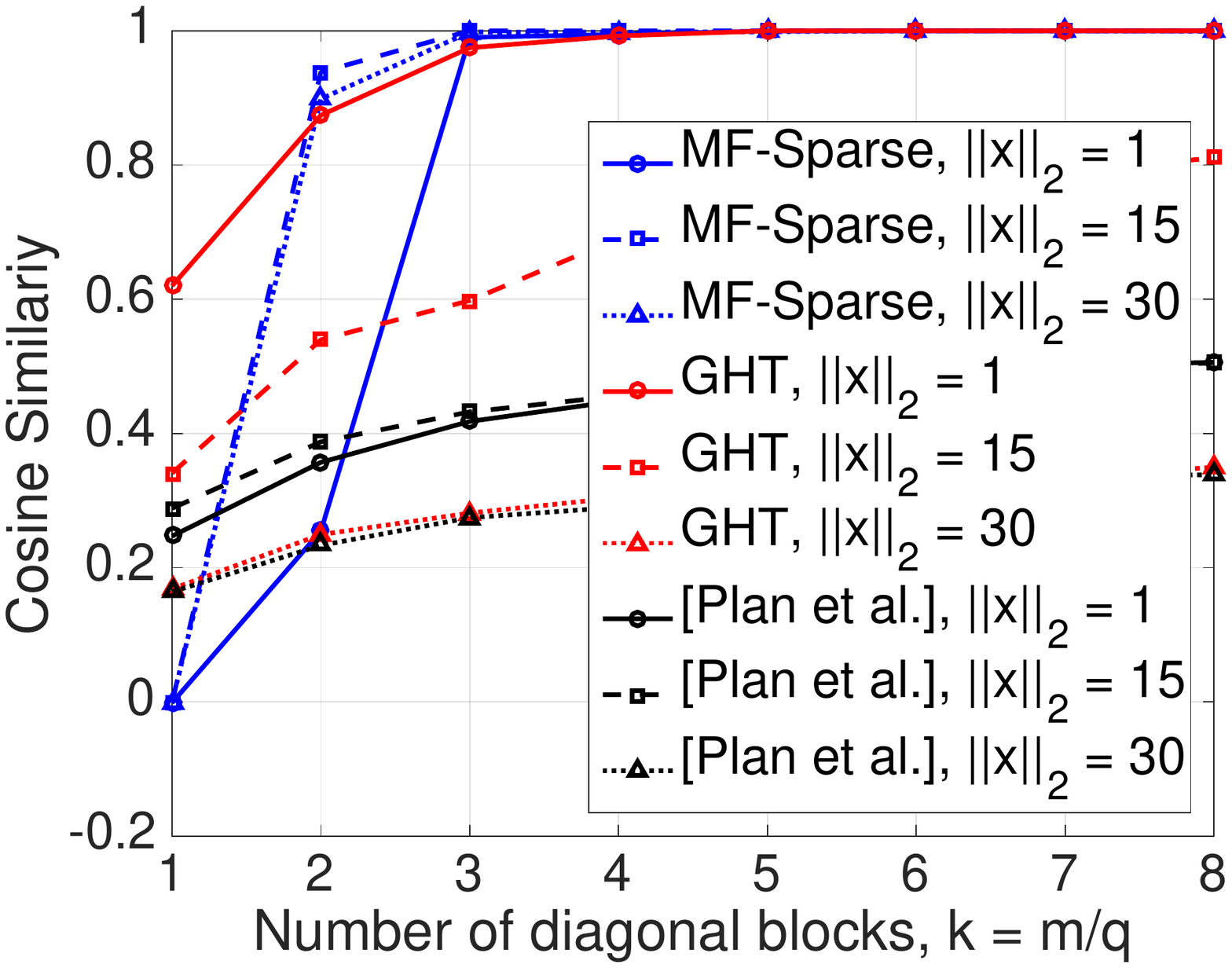}\\
(a) & (b) 
\end{tabular}
\endgroup
\end{center}
\caption{\emph{Comparison of the proposed algorithm with other algorithms. Parameters: $n =2^{14}, q= 700$. (a) Probability of recovery in terms of normalized error. (b) Cosine similarity between recovered and true signal.}}
\label{fig:sinNonoise}
\end{figure}

In Fig.~\ref{fig:sinNonoise}(b), we repeat a similar experiment, but measure performance with a different recovery criterion. In this scenario, we measure the \textit{cosine similarity} of the estimated vector $\widehat{x}$ with $x$, defined as $\frac{\widehat{x}^Tx}{\|\widehat{x}\|_2\|x\|_2}$. In addition to GHT, we also compare the performance of MF-Sparse with the single-step thresholding approach proposed by~\cite{plan2014high}. The approach of~\cite{plan2014high} only recovers the vector modulo an (unknown) scaling factor, and does not need to possess knowledge of the nonlinearity. As illustrated in Fig.~\ref{fig:sinNonoise}(b), the approach of~\cite{plan2014high} has worse performance compared to two other methods. Also, GHT shows good performance only when $\|x\|_2 = 1$, as expected. In contrast, MF-Sparse shows the best performance.

\begin{figure}[t]
\begin{center}
\includegraphics[trim = 5mm 58mm 5mm 72mm, clip, width=0.55\linewidth]{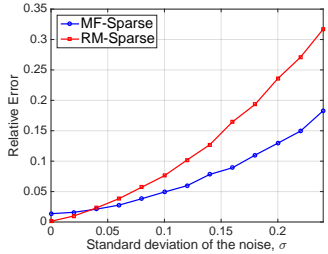}
\end{center}
\caption{\emph{Comparison of matched filtering versus rootMUSIC in the first stage. Parameters:  $n =2^{14}, q= 800$, $k =6$. %RDMF is more robust to increasing of the noise compared to DDMUSIC.
}}
\label{fig:expWnoise}
\end{figure}

\begin{figure}[t]
\begin{center}
\begingroup
\setlength{\tabcolsep}{.1pt} % Default value: 6pt
\renewcommand{\arraystretch}{.1} % Default value: 1
\begin{tabular}{ccc}      %{c@{\hskip .1pt}c@{\hskip .1pt}c}
\includegraphics[trim = 47mm 68mm 15mm 55mm, clip, width=0.32\linewidth]{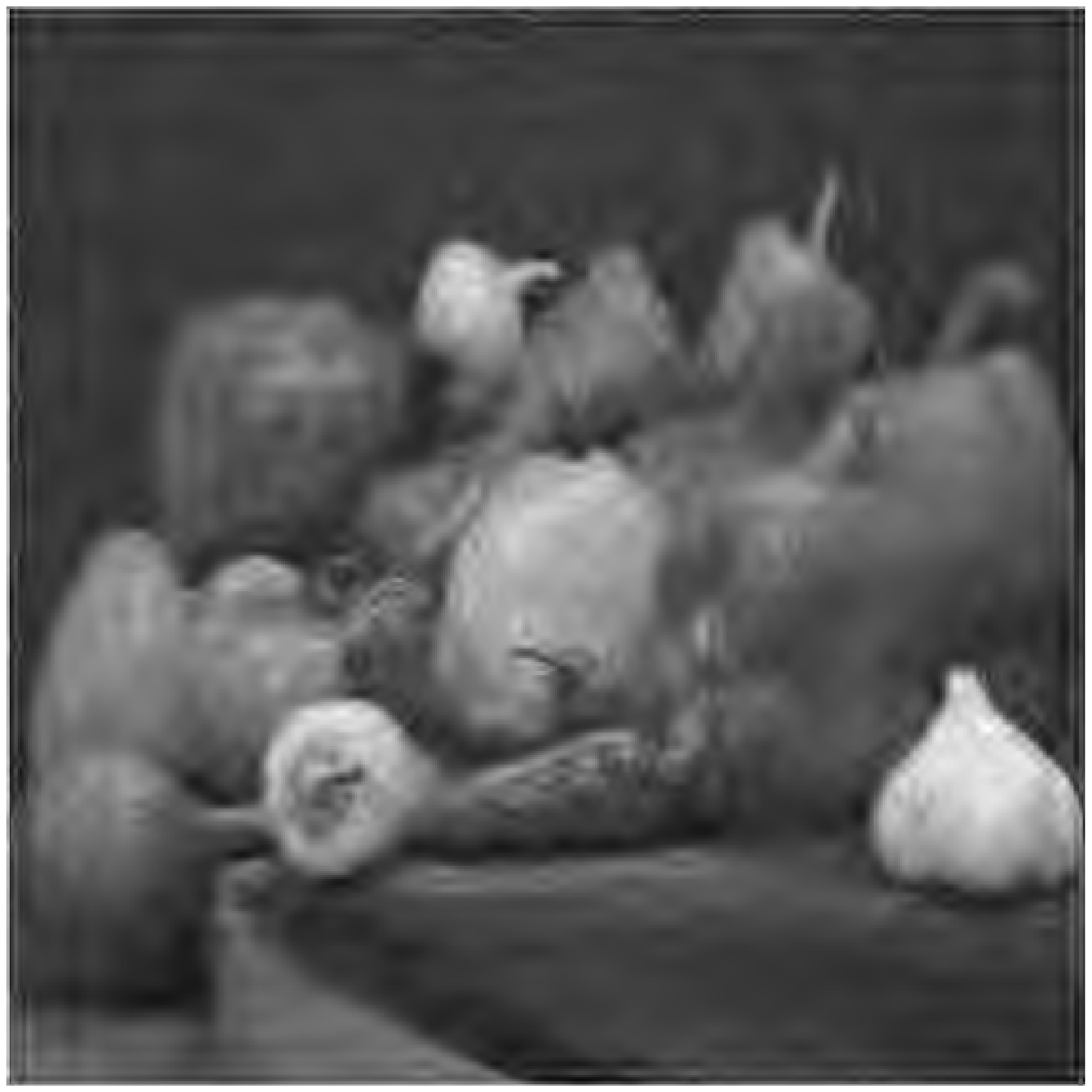}&
\includegraphics[trim = 47mm 68mm 15mm 55mm, clip, width=0.32\linewidth]{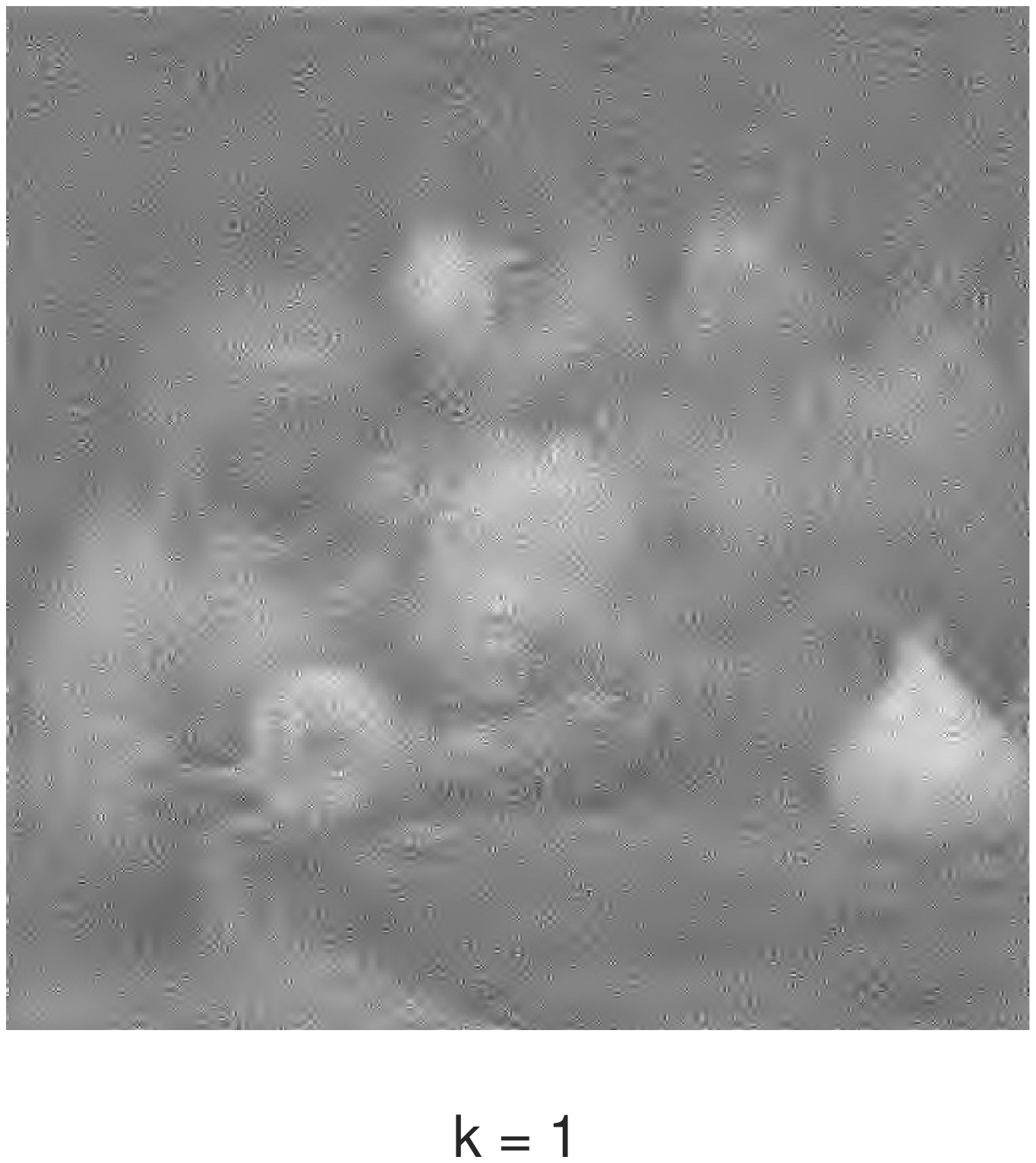} &
\includegraphics[trim = 47mm 68mm 15mm 55mm, clip, width=0.32\linewidth]{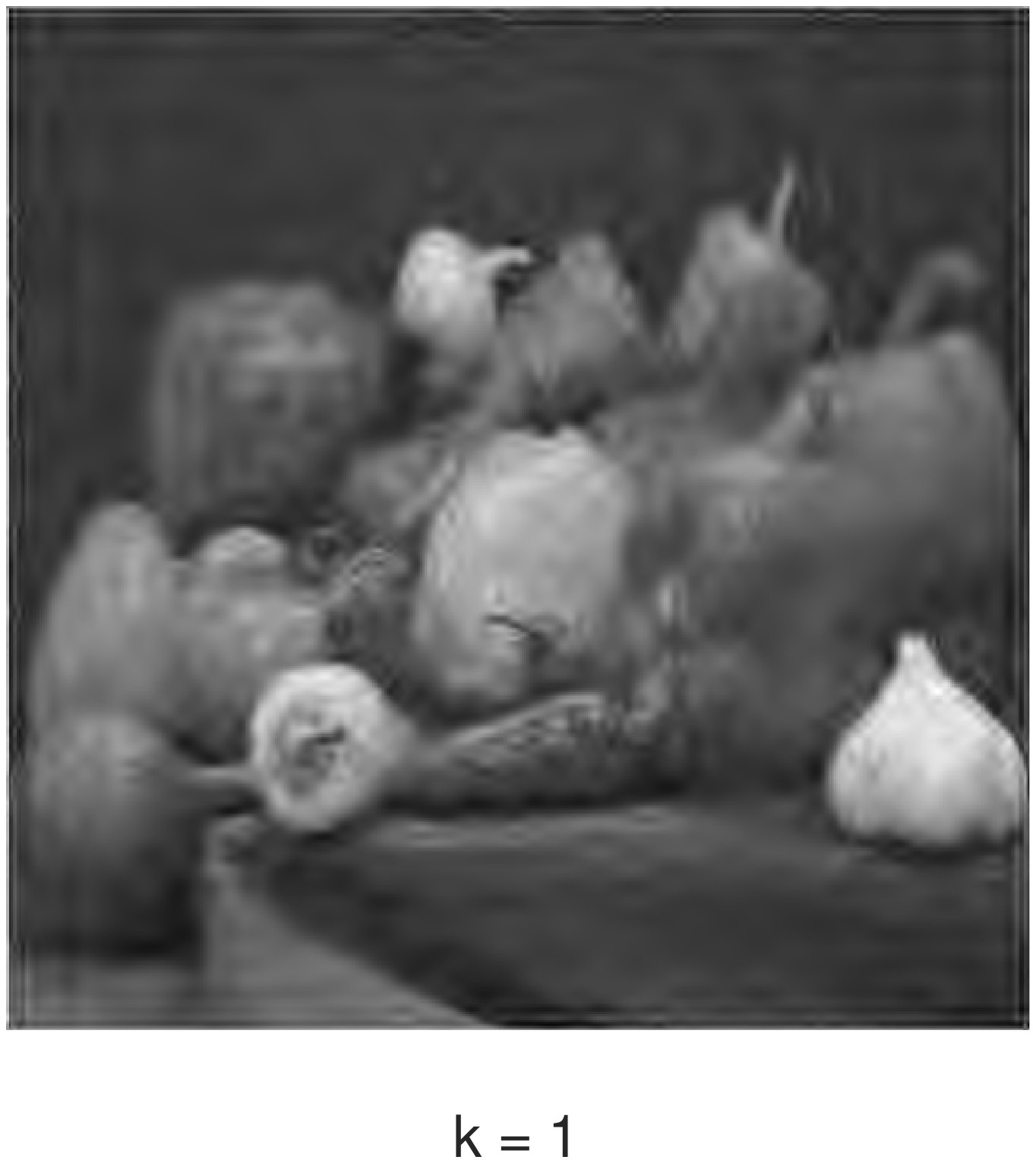} \\
(a) & (b) & (c) 
\end{tabular}
\endgroup
\end{center}
\caption{\emph{Successful reconstruction on a real $2$-D image from random sinusoidal features. (a) Test image. (b) Reconstruction quality with $k=2$ diagonal blocks.
(c) Reconstruction with $k=3$.}}
\label{fig:pepper}
\end{figure}

\begin{figure}[t]
\begin{center}
\begingroup
\setlength{\tabcolsep}{.1pt} % Default value: 6pt
\renewcommand{\arraystretch}{.1} % Default value: 1
\begin{tabular}{ccc}      %{c@{\hskip .1pt}c@{\hskip .1pt}c}
\includegraphics[trim = 47mm 68mm 15mm 55mm, clip, width=0.32\linewidth]{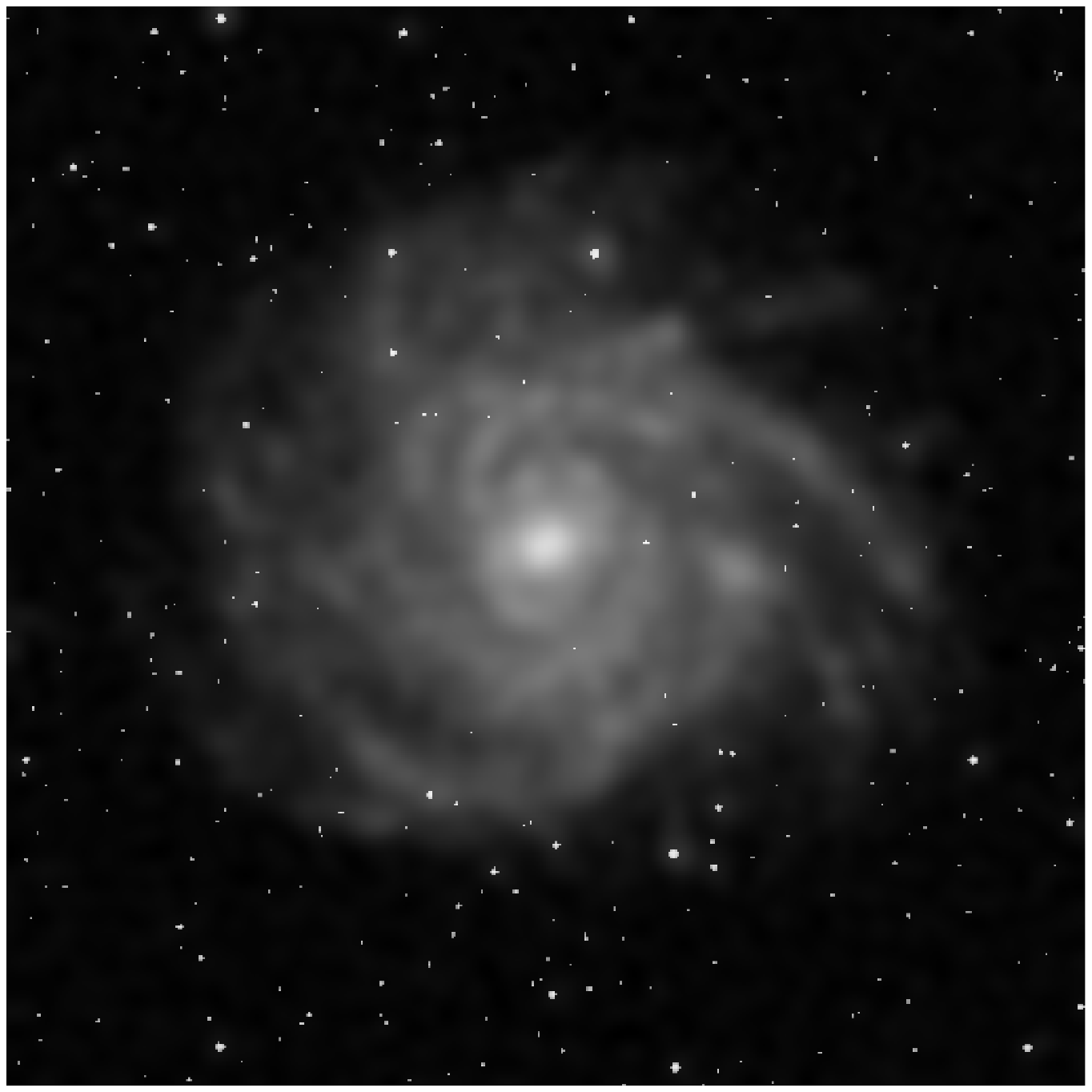}&
\includegraphics[trim = 47mm 68mm 15mm 55mm, clip, width=0.32\linewidth]{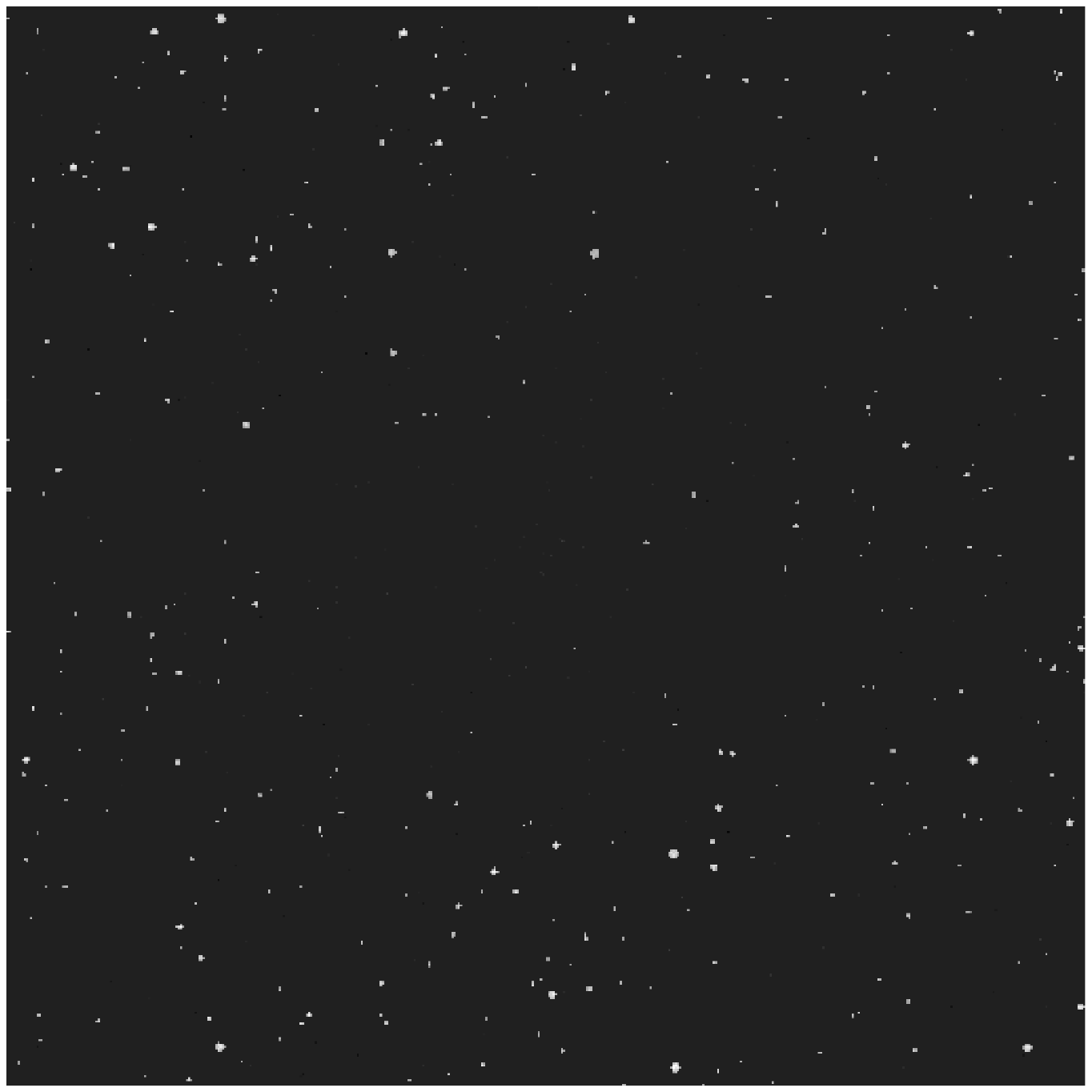} &
\includegraphics[trim = 47mm 68mm 15mm 55mm, clip, width=0.32\linewidth]{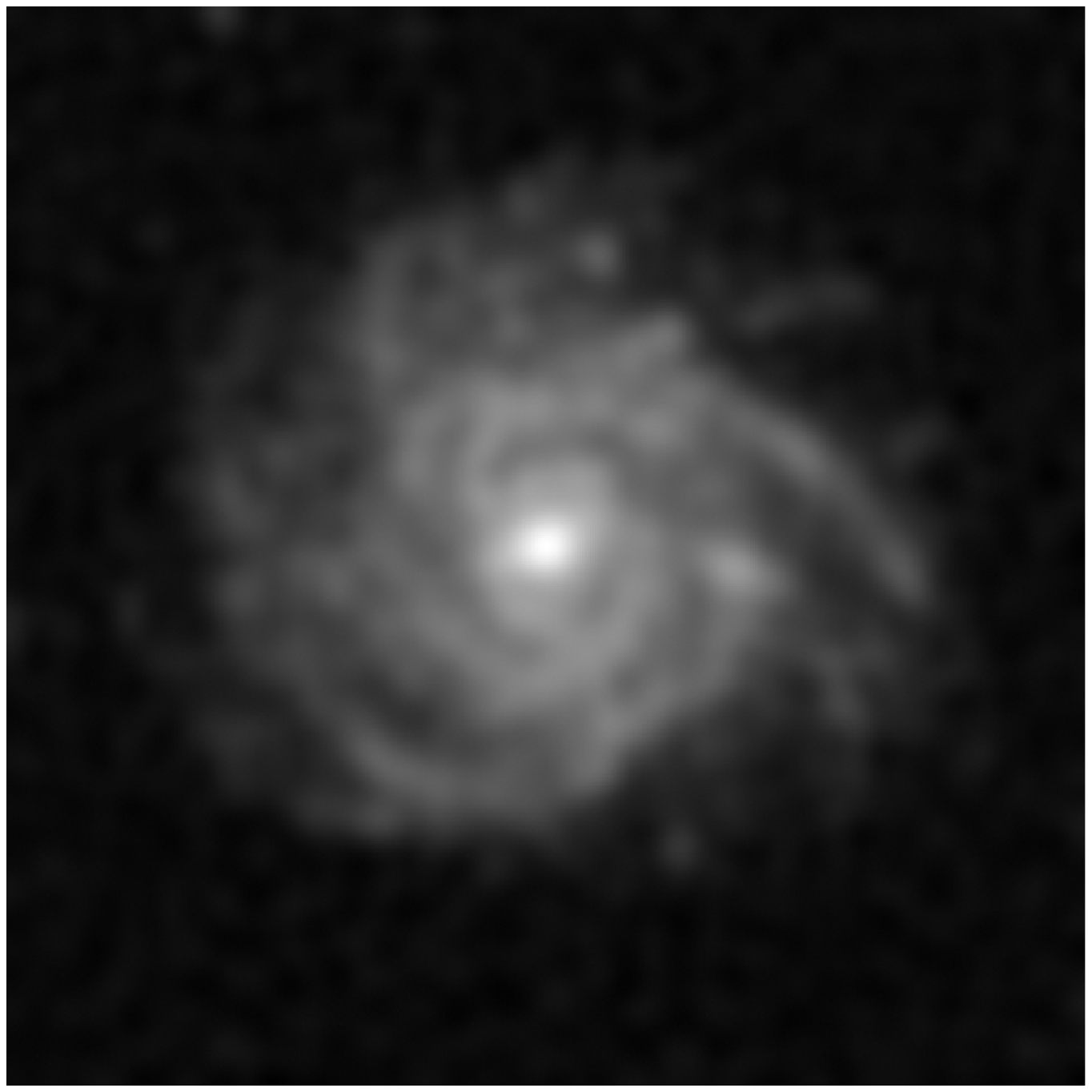} \\
(a) & (b) & (c) 
\end{tabular}
\endgroup
\end{center}
\caption{\emph{Successful demixing on a real 2-dimensional image from random sinusoidal features. Parameters: $n = 512 \times 512, s = 1000, m = k\times q=48000, g(x) = \sin(x)$. Image credits:~\cite{mccoy2014convexity}.}}
\label{fig:GalStar}
\end{figure}

Next, we consider the experiment illustrated in Fig.~\ref{fig:expWnoise}. In this experiment, we assume a complex sinusoid as the nonlinearity generating the feature map. In this scenario, we fix $k= 6$ (number of diagonal blocks in matrix $D$), $q=800$, and add i.i.d.\ Gaussian noise to the embeddings with increasing variance. Our goal is to compare the matched filtering part of MF-Sparse with classical spectral estimation techniques. Indeed, a natural question is whether we can use other spectral estimation approaches instead of the matched filter, and whether there is any advantage in choosing the elements of $D$ randomly. We attempt to illustrate the benefits of randomness through the experiment in Fig.~\ref{fig:expWnoise}. Here, we compare the relative error of MF-Sparse with an analogous algorithm that we call \emph{RM-Sparse}, which uses the RootMUSIC spectral estimation technique~\cite{schmidt1986multiple}. For RM-Sparse, in the line spectral estimation stage, we replace random diagonal blocks in matrix $D$ with deterministic diagonal ones, and letting the diagonal entries of the $r^{\mathrm{th}}$ block being proportional to $r$. In other words, the time samples are chosen to lie on a uniform grid, as RootMUSIC expects. As we see in Fig.~\ref{fig:expWnoise}, while RM-Sparse does recover the original $x$ for very low values of noise variance, MF-Sparse outperforms RM-Sparse when the noise level increases; this verifies the robustness of MF-Sparse to noise in the observations.

In addition, we evaluate our proposed algorithm for a real $2$D image. We begin with a $512\times 512$ test image. First, we obtain its 2D Haar wavelet decomposition and sparsify it by retaining the $s = 2000$ largest coefficients. Then, we synthesize the test image based on these largest coefficients, and multiply it by a subsampled Fourier matrix with $q = 16000$ multiplied with a diagonal matrix with random $\pm 1$ entries~\cite{krahmer2011new}. Further, we multiply this result with a block diagonal matrix $D$ with number of blocks $k=2,3$. Now, we apply the $\sin(\cdot)$ function element-wise to the resulting vector. Figure~\ref{fig:pepper} shows the recovered image using MF-Sparse. As is visually evident, with $k=2$, the quality of the reconstructed image is poor, but $k=3$ already yields a solution close to the true image. 
%This experiment verifies good performance of RDMF even for real $2$-D images.

Finally, we present another experiment on a real $2$D image. In this case, we assume that our signal $x$ is the superposition of two constituent signals, i.e., $x = \Phi w + \Psi z$. Hence, the random feature map is given by $y = \sin(DB\left(\Phi w + \Psi z\right))$. In this setting, as illustrated in~Fig.~\ref{fig:GalStar}(a), the signal $x$ is the mixture of galaxy and star images, where the constituent coefficient vectors $w$ and $z$ are $s$-sparse signals in the DCT basis ($\Phi$) and the canonical basis ($\Psi$). In this experiment, we fix $s = 1000$ for both $w$ and $z$.  The matrices $B$ and $D$ are generated the same way as in the experiment expressed in~Fig.~\ref{fig:pepper} with $q=16000, n=2^{18}$, and $k=3$. As we can see in Figs.~\ref{fig:GalStar}(b) and~\ref{fig:GalStar}(c), running a variant of MF-Sparse along with the DHT algorithm~\cite{soltani2016fastIEEETSP17} for the second stage can successfully separate the galaxy from the stars, verifying the applicability of MF-Sparse for more generic structured inverse problems.

%Test citations\cite{SoltaniHegde}

%\section{References}
%\label{sec:refs}

% References should be produced using the bibtex program from suitable
% BiBTeX files (here: strings, refs, manuals). The IEEEbib.bst bibliography
% style file from IEEE produces unsorted bibliography list.
% -------------------------------------------------------------------------
{{
\bibliographystyle{IEEEbib}
\bibliography{Common/mrsbiblio,Common/csbib,Common/chinbiblio,Common/kernels}

\begin{thebibliography}{10}

\bibitem{rahimi2007random}
A.~Rahimi and B.~Recht,
\newblock ``Random features for large-scale kernel machines,''
\newblock in {\em Advances in neural information processing systems}, 2007, pp.
  1177--1184.

\bibitem{additivekernels}
A.~Vedaldi and A.~Zisserman,
\newblock ``Efficient additive kernels via explicit feature maps,''
\newblock {\em IEEE Trans. Pattern Anal. Machine Intell.}, vol. 34, no. 3, pp.
  480--492, 2012.

\bibitem{hashkernels}
Q.~Shi, J.~Petterson, G.~Dror, J.~Langford, A.~Smola, and S.~Vishwanathan,
\newblock ``Hash kernels for structured data,''
\newblock {\em J. Machine Learning Research}, vol. 10, no. Nov, pp. 2615--2637,
  2009.

\bibitem{fastfood}
Q.~Le, T.~Sarlos, and A.~Smola,
\newblock ``Fastfood-approximating kernel expansions in loglinear time,''
\newblock in {\em Proc. Int. Conf. Machine Learning}, 2013.

\bibitem{nonlincompanalysis}
D.~Lopez-Paz, S.~Sra, A.~Smola, Z.~Ghahramani, and B.~Sch{\"o}lkopf,
\newblock ``Randomized nonlinear component analysis,''
\newblock in {\em Proc. Int. Conf. Machine Learning}, 2014.

\bibitem{aswinkernelsparse}
J.~Chang, A.~Sankaranarayanan, and B.~V.~K. Vijayakumar,
\newblock ``Random features for sparse signal classification,''
\newblock in {\em {IEEE} Conf. Comp. Vision and Pattern Recog}, 2016.

\bibitem{kakade2011}
S.~Kakade, V.~Kanade, O.~Shamir, and A.~Kalai,
\newblock ``Efficient learning of generalized linear and single index models
  with isotonic regression,''
\newblock in {\em Adv. Neural Inf. Proc. Sys. (NIPS)}, 2011, pp. 927--935.

\bibitem{boufounos20081}
P.~Boufounos and R.~Baraniuk,
\newblock ``1-bit compressive sensing,''
\newblock in {\em Int. Conf. Info. Sciences and Systems {(CISS)}}. IEEE, 2008,
  pp. 16--21.

\bibitem{jacques1bit}
L.~Jacques, J.~Laska, P.~Boufounos, and R.~Baraniuk,
\newblock ``Robust 1-bit compressive sensing via binary stable embeddings of
  sparse vectors,''
\newblock {\em IEEE Trans. Inform. Theory}, vol. 59, no. 4, pp. 2082--2102,
  2013.

\bibitem{candes2015phase}
E.~Candes, X.~Li, and M.~Soltanolkotabi,
\newblock ``Phase retrieval via wirtinger flow: Theory and algorithms,''
\newblock {\em IEEE Trans. Inform. Theory}, vol. 61, no. 4, pp. 1985--2007,
  2015.

\bibitem{boufounosquant}
P.~Boufounos,
\newblock ``Universal rate-efficient scalar quantization,''
\newblock {\em IEEE Trans. Inform. Theory}, vol. 58, no. 3, pp. 1861--1872,
  2012.

\bibitem{modulocamera}
H.~Zhao, B.~Shi, C.~Fernandez-Cull, S.-K. Yeung, and R.~Raskar,
\newblock ``Unbounded high dynamic range photography using a modulo camera,''
\newblock in {\em Proc. Int. Conf. Comp. Photography}, 2015.

\bibitem{iwen2015robust}
M.~Iwen, A.~Viswanathan, and Y.~Wang,
\newblock ``Robust sparse phase retrieval made easy,''
\newblock {\em Appl. Comput. Harmon. Anal.}, 2015.

\bibitem{bahmani2015efficient}
S.~Bahmani and J.~Romberg,
\newblock ``Efficient compressive phase retrieval with constrained sensing
  vectors,''
\newblock in {\em \em Adv. Neural Inf. Proc. Sys. (NIPS)}, 2015, pp. 523--531.

\bibitem{eftekhari2013matched}
A.~Eftekhari, J.~Romberg, and M.~Wakin,
\newblock ``Matched filtering from limited frequency samples,''
\newblock {\em IEEE Transactions on Information Theory}, vol. 59, no. 6, pp.
  3475--3496, 2013.

\bibitem{cosamp}
D.~Needell and J.~Tropp,
\newblock ``{CoSaMP}: {I}terative signal recovery from incomplete and
  inaccurate samples,''
\newblock {\em Appl. Comput. Harmon. Anal.}, vol. 26, no. 3, pp. 301--321,
  2009.

\bibitem{modelcs}
R.~Baraniuk, V.~Cevher, M.~Duarte, and C.~Hegde,
\newblock ``Model-based compressive sensing,''
\newblock {\em IEEE Trans. Inform. Theory}, vol. 56, no. 4, pp. 1982--2001,
  Apr. 2010.

\bibitem{approxIT}
C.~Hegde, P.~Indyk, and L.~Schmidt,
\newblock ``Approximation algorithms for model-based compressive sensing,''
\newblock {\em IEEE Trans. Inform. Theory}, vol. 61, no. 9, pp. 5129--5147,
  2015.

\bibitem{rechtfazelparillo}
B.~Recht, M.~Fazel, and P.~Parrilo,
\newblock ``Guaranteed minimum-rank solutions of linear matrix equations via
  nuclear norm minimization,''
\newblock {\em SIAM Review}, vol. 52, no. 3, pp. 471--501, 2010.

\bibitem{soltani2016fastIEEETSP17}
M.~Soltani and C.~Hegde,
\newblock ``Fast algorithms for demixing sparse signals from nonlinear
  observations,''
\newblock {\em IEEE Trans. Sig. Proc.}, vol. 65, no. 16, pp. 4209--4222, Aug
  2017.

\bibitem{beck2013sparsity}
A.~Beck and Y.~Eldar,
\newblock ``Sparsity constrained nonlinear optimization: Optimality conditions
  and algorithms,''
\newblock {\em SIAM Journal on Optimization}, vol. 23, no. 3, pp. 1480--1509,
  2013.

\bibitem{plan2014high}
Y.~Plan, R.~Vershynin, and E.~Yudovina,
\newblock ``High-dimensional estimation with geometric constraints,''
\newblock {\em arXiv preprint arXiv:1404.3749}, 2014.

\bibitem{Cons2015NIPS}
X.~Yi, Z.~Wang, C.~Caramanis, and H.~Liu,
\newblock ``Optimal linear estimation under unknown nonlinear transform,''
\newblock in {\em Adv. Neural Inf. Proc. Sys. (NIPS)}, 2015, pp. 1549--1557.

\bibitem{yang2015sparse}
Z.~Yang, Z.~Wang, H.~Liu, Y.~Eldar, and T.~Zhang,
\newblock ``Sparse nonlinear regression: Parameter estimation and asymptotic
  inference,''
\newblock {\em arXiv preprint arXiv:1511.04514}, 2015.

\bibitem{bahmani2011greedy}
S.~Bahmani, P.~Boufounos, and B.~Raj,
\newblock ``Greedy sparsity-constrained optimization,''
\newblock {\em JMLR}, 2013.

\bibitem{foucart2013}
S.~Foucart and H.~Rauhut,
\newblock {\em A mathematical introduction to compressive sensing},
\newblock Springer, 2013.

\bibitem{eldarxampling}
M.~Mishali and Y.~Eldar,
\newblock ``Wideband spectrum sensing at sub-nyquist rates [applications
  corner],''
\newblock {\em IEEE Sig. Proc. Mag.}, vol. 28, no. 4, pp. 102--135, 2011.

\bibitem{tangcsoffgrid}
G.~Tang, B.~Bhaskar, P.~Shah, and B.~Recht,
\newblock ``Compressed sensing off the grid,''
\newblock {\em IEEE Trans. Inform. Theory}, vol. 59, no. 11, pp. 7465--7490,
  2013.

\bibitem{chioffgrid}
Y.~Chen and Y.~Chi,
\newblock ``Robust spectral compressed sensing via structured matrix
  completion,''
\newblock {\em IEEE Trans. Inform. Theory}, vol. 60, no. 10, 2014.

\bibitem{yuan2014gradient}
X.~Yuan, P.~Li, and T.~Zhang,
\newblock ``Gradient hard thresholding pursuit for sparsity-constrained
  optimization,''
\newblock in {\em \em Proc. Int. Conf. Machine Learning}, 2014, pp. 127--135.

\bibitem{jain2014iterative}
P.~Jain, A.~Tewari, and P.~Kar,
\newblock ``On iterative hard thresholding methods for high-dimensional
  m-estimation,''
\newblock in {\em \em \em Adv. Neural Inf. Proc. Sys. (NIPS)}, 2014, pp.
  685--693.

\bibitem{mccoy2014convexity}
M.~McCoy, V.~Cevher, Q.~Dinh, A.~Asaei, and L.~Baldassarre,
\newblock ``Convexity in source separation: Models, geometry, and algorithms,''
\newblock {\em IEEE Sig. Proc. Mag.}, vol. 31, no. 3, pp. 87--95, 2014.

\bibitem{schmidt1986multiple}
R.~Schmidt,
\newblock ``Multiple emitter location and signal parameter estimation,''
\newblock {\em IEEE Trans. Antennas and Prop.}, vol. 34, no. 3, pp. 276--280,
  1986.

\bibitem{krahmer2011new}
F.~Krahmer and R.~Ward,
\newblock ``New and improved johnson-lindenstrauss embeddings via the
  restricted isometry property,''
\newblock {\em SIAM J. Math. Anal.}, vol. 43, no. 3, pp. 1269--1281, 2011.

\end{thebibliography}
}}

\end{document}